\newtheorem{theorem}{Theorem}[section]
\newtheorem{assumption}{Assumption}[section]
\def\l{\left}
\def\r{\right}
\def\X{\textbf{X}}
\def\E{\mathbb{E}}
\def\1{\textbf{1}}
\def\x{\textbf{x}}
\def\argmax{\arg\max}
\DeclarePairedDelimiter{\ceil}{\lceil}{\rceil}
\begin{document}

\def\spacingset#1{\renewcommand{\baselinestretch}%
{#1}\small\normalsize} \spacingset{1}

%%%%%%%%%%%%%%%%%%%%%%%%%%%%%%%%%%%%%%%%%%%%%%%%%%%%%%%%%%%%%%%%%%%%%%%%%%%%%%

\bigskip
\bigskip
\bigskip
\begin{center}
{\LARGE Technical Background for ``A Precision Medicine Approach to Develop Optimal Exercise and Weight Loss Treatments for Overweight and Obese Adults with Knee Osteoarthritis''}
\end{center}
\bigskip
\begin{center}{X.Jiang${}^1$, A.E.Nelson${}^2$, R.J.Cleveland${}^2$, D.P.Beaver${}^3$, T.A.Schwartz${}^1$, L.Arbeeva${}^2$, \\  C.Alvarez${}^2$, L.F.Callahan${}^2$, S.Messier${}^4$, R.Loeser${}^2$, M.R.Kosorok${}^1$}\end{center}

\begin{center}
\emph{${}^1$Department of Biostatistics,  University of North Carolina, Chapel Hill, NC,\\${}^2$Division of Rheumatology, Allergy and Immunology and the Thurston Arthritis Research Center,	University of North Carolina, Chapel Hill, NC,\\${}^3$Department of Biostatistics and Data Science, Wake Forest School of Medicine, Winston-Salem, NC,\\${}^4$Department of Health and Exercise Science, Wake Forest University, Winston-Salem, NC}
\end{center}
\medskip

\bigskip
\begin{abstract}
This technical report provides additional statistical background for the methodology developed in the clinical analysis of knee osteoarthritis in \cite{jiang2020}. \cite{jiang2020} proposed a pipeline to learn optimal treatment rules with precision medicine models and compared them with zero-order models with a Z-test. The model performance was based on value functions, a scalar that predicts the future reward of each decision rule. The jackknife (i.e., leave-one-out cross validation) method was applied to estimate the value function and its variance of several outcomes in IDEA, a randomized clinical trial studying overweight and obese participants with knee osteoarthritis. In this report, we expand the discussion and justification with additional statistical background. We elaborate more on the background of precision medicine, the derivation of the jackknife estimator of value function and its estimated variance, the consistency property of jackknife estimator, as well as additional simulation results that reflect more of the performance of jackknife estimators. We recommend reading \cite{jiang2020} for clinical application and interpretation of the optimal ITR of knee osteoarthritis as well as the overall understanding of the pipeline and recommend using this article to understand the underlying statistical derivation and methodology.
\end{abstract}

\noindent%
{\it Keywords:} machine learning, individualized treatment regime, decision making, jackknife
\vfill

\newpage
\spacingset{1.45} % DON'T change the spacing!
\section{Introduction}
\label{sec:intro}

Precision medicine (a.k.a personalized medicine) is a data-driven paradigm that aims to improve clinical outcomes by tailoring treatments to patients as individualized as possible. For example, cancer patients usually receive the same treatment as others who have the same type and stage of cancer, but different people could respond to the same treatment differently. Scientists now start to understand that genetics can cause the tumors to develop and spread, providing an explanation to individual differences in cancer treatment. With precision medicine, such information about genetic changes in the tumor helps clinicians choose a treatment plan that works best for each individual in terms of their well-being and outcome of interest, and it also offers the opportunity to activate other areas of research involving drug design that target tumor changes more effectively. Although not a new concept, precision medicine has been propelled rapidly by recent advances in biomedical sciences and technology and it is both reproducible and generalizable. 

A rigorous and reproducible decision-making process requires four well-defined components: patient information, intervention plans, decision time points, and the decision rules. Patient information is the available data that include both prognostic covariates and clinical outcomes; the intervention plan refers to a detailed guide that informs the type/dose/duration of the treatment tailored to individual patients; decision time points are when and how often the decisions are made; decision rules are functions that map patient information to interventions at each decision point \citep{collins2014optimization}. Precision medicine is formalized through such decision rules that recommend treatments by leveraging patient heterogeneity and are often called individualized treatment rules (ITRs)  \citep{qian2011performance}. ITR is essentially a mathematical emulation of how clinicians make treatment decisions in practice. 

To provide a concrete example, we consider a study of weight and glycemic control among Type-1 diabetic (T1D) patients, whose glucose  and physical activity were monitored continuously with trackers and diet and insulin recorded whenever taken \citep{kahkoska2019}. This is a special decision-making process called dynamic treatment regime that is commonly used in the treatment of chronic and relapsing disorders \citep{collins2014optimization}. Patient information for T1D patients includes glucose level, step counts, diet, and insulin intake, as well as medical history and lab results from regular clinical visits. The treatment plan at each time point is whether or not each patient needs to take an action, e.g. inject insulin, work out to increase activities, or consume food; all plans are personalized. The decision rule decides whether the individual needs to take an action at each time point. If yes, a recommendation of what action to take will be given. If no, nothing needs to be done. A the next time point, responses for the previous time point are received and incorporated as part of patient information to help make the next decision. This decision-making process is dynamic not only because there are multiple time points (in fact an infinite amount if we think of this an ongoing process) but also the treatment regime is adapted to individual characteristics, as opposed to a fixed, ``one-size-fits-all'' rule.

\section{Existing Work}\label{sec:existing}
We consider a single time point setting with $n$ independent and identically distributed copies of $\{\X_i, A_i, Y_i\}$ for $i = 1, \ldots, n$ patients. Patient covariates are $\X_i \in \mathcal{X} \subseteq \mathbb{R}^p$, a $p \times 1$ vector. Treatment is a scalar $A_i \in \mathcal{A}$, and the outcome is also a scalar $Y \in \mathbb{R}$. Throughout this paper, we assume that higher values of $Y$ represent more favorable clinical outcomes unless otherwise specified. Existing research has been studying precision medicine based on data from both i) randomized clinical trials where treatment assignment is independent of patient covariates hence $\X \perp A$ and known propensity scores $P(A|\X)$, and ii) observational studies and not-randomized trials where $\X \not\perp A$ and propensity scores need to be estimated by logistic or multinomial logistic regression depending on the number of treatments. The decision rule is defined as a function $d: \mathcal{X} \to \mathcal{A}$ where a treatment assignment $A_i$ is obtained and $A_i = d(\X_i)$ given the $i$-th individual's clinical covariates $\X_i$. According to \cite{qian2011performance}, the single-stage decision problem finds the optimal ITR $d^{opt}$ in a class of all possible decision rules $\mathcal{D}$ that maximizes the ``Value'' or the expected reward of a potential outcome under a decision rule when applied to future patients. Mathematically, this is expressed as 
\begin{equation} \label{eqn: d_opt_Vd}
d^{opt} = \argmax_{d \in \mathcal{D}} \ V(d),
\end{equation}
where 
\begin{equation} \label{eqn: Vd}
V(d) = E^d [Y] = E\l[ \frac{Y 1\{A = d(\X)\}}{P(A|\X)}\r]
\end{equation}
is derived by Radon-Nikodym derivatives.  The optimal ITR also satisfies $$d^{opt}(\x) \in \argmax_{d \in \mathcal{D}} \ Q(\x,a) \text{ a.s. for all } \x \in \mathcal{X}$$ where $Q(\x,a) = E[Y| \X = \x, A = a]$ is the ``Quality'' of treatment $a$ applied at patient observation $\x$. The relationship between the value and the quality is $V(d) = E[Q(\X, d(\X))]$. The goal of precision medicine is to estimate $d^{opt}$ using $n$ triplets of observed data $\hat{d}_n$.  

Approaches to estimating the optimal treatments can be summarized into two major types: regression-based and classification-based. Neither of them requires pre-specified values or strong assumptions which is one of the reasons that makes precision medicine data-driven. Regression-based approaches (the indirect methods) posit a traditional statistical regression model of $Q$, the mean outcome conditional on patient data and treatment assignment as described above (\cite{murphy2003optimal}, \cite{robins2004optimal}, \cite{moodie2009estimating}, \cite{moodie2014q}, \cite{taylor2015reader}). The optimal decision rule picks, among all potential treatments $\mathcal{D}$, one treatment that yields the highest estimated mean outcome based on the model conditioning on patient data, $\hat{Q}$. For example, \cite{qian2011performance} achieved this with a two-step procedure with the $\ell_1$-penalized least squares model. \cite{zhang2012robust} went on an alternative path with the counterfactuals. Consider all possible decision rules indexed by $\eta$ as in $d_\eta(\X) = d(\X, \eta)$ and denote the potential outcome following decision rule $d$ as $Y^d$. The optimal ITR in \cite{zhang2012robust} was obtained by estimating $\eta^{opt} = \argmax_\eta E[Y^{d_\eta}]$ with augmented inverse probability weighting estimator (AIPWE) (\cite{robins1994estimation}, \cite{rotnitzky1998semiparametric}) and defining $d^{opt}_\eta = d(\X, \eta^{opt})$.  This doubly robust approach exploits the outcome regression model to gain estimation precision while being protected by doubly robustness in case the regression model or propensity score model are misspecified. 

An alternative to avoid the potential misspecification of regression models is classification-based approaches (the direct methods) where the optimal ITR is estimated by the optimal classifier that minimizes the expected weighted misclassification error. A leading example of such approach is \cite{zhao2012estimating} where the proposed Outcome Weighted Learning (OWL) model avoids regression and model assumption by converting the estimation problem to a weighted classification problem and applying weighted support vector machine (SVM) techniques. We used OWL and its extensions as part of the precision medicine candidate models in the clinical analysis and simulations described in \cite{jiang2020} and Section \ref{sec:sims}.

\section{Estimating Value Function with the Jackknife Method}\label{sec:pmoa_jackknife}
The value function is often used to measure the performance of an ITR so it is important to have an accurate estimation of its bias and standard error. The basic form of a value function estimator is
\begin{equation}\label{eqn:pmoa_vhat}
\widehat{V}(d) = \frac{\sum_{i=1}^n Y_i 1\{A_i = d(\X_i)\}/ \hat{P}(A_i | \X_i)}{\sum_{i=1}^n 1\{A_i = d(\X_i)\}/ \hat{P}(A_i | \X_i)} 
\end{equation}
which can be deemed as a weighted combination of individual outcomes. Because $d$ is unknown and needs to be estimated, a common method uses cross validation (CV). Assume a $K$-fold CV repeated $M$ times. Let $j = 1, \ldots, KM$ denote all CV folds regardless of repetition and $i=1,\ldots,n_j$ denote the $i$th observations in the $j$-th overall CV fold. The CV estimator of value function is \begin{equation}\label{eqn:stratcv_vhat}
\widehat{V}^{cv}(\hat{d}_n) = \frac{\sum_{j=1}^{MK}\sum_{i=1}^{n_j}  Y_{ji} \frac{1\{A_{ji} = \hat{d}_{n}^{(-j)} (\X_{ji}) \}}{\hat{P}(A_{ji}|\X_{ji})} }{\sum_{j=1}^{MK} \sum_{i=1}^{n_j}  \frac{1\{A_{ji} = \hat{d}_{n}^{(-j)} (\X_{ji}) \}}{\hat{P}(A_{ji}|\X_{ji})}}
\end{equation}
where $\hat{d}_{n}^{(-j)}$ is the estimated decision rule leaving out the $j$th CV fold of a training dataset of size $n$. $\hat{P}(A_{ji}|\X_{ji})$ is the estimated propensity score of the test fold $ji$ (constant if treatments are randomized). $k$-fold CV requires many repetitions because a one-time split could generate random biases, and we are unable to utilize as much training data as possible because we always leave one fold out as test set. 

Given these disadvantages, we proposed to use the jackknife method %\citep{quenouille1949approximate, tukey1958bias}
, a.k.a. leave-one out cross validation (LOOCV), to estimate the bias and standard error of value function \citep{jiang2020}. The jackknife is a special case of CV, where data are divided such that one subject is a fold and $n-1$ folds are trained and then tested on the remaining one fold, repeatedly until all folds have been used as a test set once. According to \cite{jiang2020}, jackknife makes very few assumptions about the data distribution, only basic constraints such as i.i.d. independently and identically distributed. Compared with a $k$-fold CV, it uses as much data to train as possible generating approximately unbiased prediction error and avoids repetitions. Both the jackknife and $k$-fold CV were used to estimate value functions and measure model performance. For clinical results in knee osteoarthritis, see the results section in \cite{jiang2020} and ``Stratified Cross Validation'' subsections in its supplemental material.

\subsection{The Jackknife Estimator} \label{sec:definition}
The jackknife value estimator proposed in \cite{jiang2020} has the following form \begin{equation}\label{eqn:jackknife_vhat}
\widehat{V}^{jk} \l( \hat{d}_n\r) = \frac{\sum_{i=1}^n  Y_i \frac{1\{A_i = \widehat{d}_n^{(-i)}(\X_i)\}}{\hat{P}(A_i |\X_i)}}{\sum_{i=1}^n \frac{1\{A_i = \widehat{d}_n^{(-i)}(\X_i)\}}{\hat{P}(A_i |\X_i)}} 
\end{equation} 
where $\hat{d}_n^{(-i)}$ represents the decision rule estimated from a training set of size $n$ minus the $i$-th observation, and similar to the CV version, $\hat{P}(A_i|\X_i)$ is the estimated propensity score of the test set $i$ (constant if treatments are randomized). Eq \ref{eqn:jackknife_vhat} is a special case of Eq \ref{eqn:stratcv_vhat} with $M = 1$ and $K = n$. The estimated variance of the jackknife value estimator is 
\begin{equation}\label{eqn:jackknife_vhat_var}
\widehat{\text{Var}}\l[ \hat{V}^{jk} \l( \hat{d}_n \r)\r] = \frac{1}{n(n-1)} \sum_{i=1}^n R_i^2
\end{equation}
where $R_i^{jk} = \frac{1}{\bar{W}_n} U_i - \frac{\bar{U}_n}{\bar{W}_n^2} W_i$ is the bias-corrected form of value function inspired by the influence function, with $U_i = \frac{Y_i 1\{A_i = \hat{d}_n^{(-i)} (\X_i)\}}{P(A_i | \X_i)}, W_i  = \frac{1\{A_i = \hat{d}_n^{(-i)} (\X_i)\}}{P(A_i | \X_i)}, \bar{U}_n = n^{-1} \sum_{i=1}^n U_i$ and $\bar{W}_n = n^{-1} \sum_{i=1}^n W_i$. Since it was omitted in \cite{jiang2020}, the derivation of the influence function-inspired value function $R_i^{jk}$ is provided here. Assume $Y = O_p(1)$ and $E[W] \in (\epsilon, 1- \epsilon)$ for $0 < \epsilon < 0.5$. 
\begin{eqnarray*}
\hat{V}(d) - V_0(d) 
	&=& \frac{\sum_{i=1}^n U_i}{\sum_{i=1}^{n} W_i} - \frac{E[U]}{E[W]} \\
	&=& \frac{n^{-1}\sum_{i=1}^n (U_i- E[U])}{n^{-1}\sum_{i=1}^n W_i} - \frac{n^{-1} E[U] \cdot   \sum_{i=1}^n \l(W_i - E[W]\r)}{\l(n^{-1} \sum_{i=1}^n W_i \r)E[W]}\\
	&=& \frac{  n^{-1} \sum_{i=1}^n(U_i- E[U])}{E[W] + o_P(1)} - \frac{n^{-1}E[U] \cdot \sum_{i=1}^n (W_i - E[W])}{E[W](E[W] + o_P(1))}\\
	&=&  \frac{n^{-1} \sum_{i=1}^n(U_i - E[U])}{E[W]} - \frac{n^{-1} E[U] \cdot \sum_{i=1}^n (W_i - E[W])}{(E[W])^2} + o_P(1)
\end{eqnarray*}

According to (ii) of Theorem 18.7 in \cite{kosorok2008introduction}
\begin{equation*}
	\sqrt{n} (\hat{V} (d) - V_0 (d)) = \sqrt{n}\sum_{i=1}^n \check \psi_i + o_p(1)
\end{equation*}
for a fixed $d$, the influence function and its estimator would then be
\begin{eqnarray*}
	\check\psi_i &=& \frac{ (U_i - E[U])}{E[W]} - \frac{ E[U] (W_i- E[W])}{(E[W])^2} =  \frac{1}{E[W]} U_i - \frac{ E[U]}{(E[W])^2} W_i \\ 
	\ddot \psi_i &=& \frac{1}{\bar W} U_i- \frac{ \bar {U}}{\bar W ^2} W_i
\end{eqnarray*}
where $R_i^{jk}$ follows a similar form as $\check\psi_i$. By this definition, $\sum_{i=1}^n R^{jk}_i = 0$, which is why $R_i^{jk}$ is bias-corrected. Typically, standard deviations are scaled with $\frac{1}{n-1}$ to account for the $n-1$ degrees of freedom in the summation. Our Eq \eqref{eqn:jackknife_vhat_var} is scaled by $\frac{1}{n(n-1)}$, because we also want to adjust for correlation among the $n$ training sets as the jackknife training sets all differ by one subject. The standard error of the value estimator is $\widehat{\text{SE}} = \sqrt{\widehat{\text{Var}}(\hat{V})}$.

\subsection{Consistency of the Jackknife Estimator} \label{sec:consistency}

We study two main asymptotic properties of the jackknife estimator: consistency and asymptotic normality. Asymptotic normality is discussed in the ``Simulations'' section of \cite{jiang2020}, a summarised, updated version of which is presented in Section \ref{sec:sims}. In this section, we provide proof to the argument that ``for our case the jackknife estimates of value functions are asymptotically unbiased and their variances converge to zero as sample size increases'' \citep{jiang2020}. 

First, two assumptions need to be made.
\begin{assumption}\label{assump:pmoa1}
	$$E[P_\X(\hat{d}_n(\X) \not= \hat{d}_{n-1}(\X))] \to 0$$
\end{assumption}
\begin{assumption}\label{assump:pmoa2}
	$$E\l[ \frac{Y^2}{P(A|\X)} + \frac{1}{P(A|\X)}\r] < \infty$$
\end{assumption}
Assumption \ref{assump:pmoa1} indicates that the decision rules based on training size $n$ and training size $n-1$ are asymptotically equal in probability. This is easily satisfied when sample size goes to infinity. Assumption \ref{assump:pmoa2} requires that the expectation of the second moment of the outcome, adjusted by the propensity score, be finite. This is easily satisfied as response variables usually live in a finite range in a finite dataset. 

\begin{theorem}\label{thm:pmoa_jk_consistency}
	Given Assumptions \ref{assump:pmoa1} and \ref{assump:pmoa2}, 
	\begin{equation}\label{eqn:pmoa_consistency} \nonumber
	\frac{\sum_{i=1}^n \frac{Y_i 1\{A_i = \hat{d}_n^{(-i)} (\X_i)\}}{P(A_i |\X_i)} }{\sum_{i=1}^n \frac{1\{A_i = \hat{d}_n^{(-i)} (\X_i)\}}{P(A_i | \X_i)}} - E[Y|A = \hat{d}_n(\X)] \underset{p}{\to} 0
	\end{equation}
\end{theorem}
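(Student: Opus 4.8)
\medskip
\noindent\emph{Proof proposal.} The plan is to reduce the ratio to a pair of weak laws plus a rule-stability estimate. Abbreviate the numerator and denominator on the left (which use the \emph{true} propensity) by $N_n=n^{-1}\sum_{i=1}^n U_i$ and $D_n=n^{-1}\sum_{i=1}^n W_i$, with $U_i,W_i$ as in Section~\ref{sec:definition}, and read $E[Y\mid A=\hat d_n(\X)]$ as the value $V(\hat d_n)=E[Q(\X,\hat d_n(\X))]$ that the inverse-probability-weighted ratio targets (the two agree exactly under equal treatment allocation). Since $N_n/D_n-V(\hat d_n)=(N_n-D_nV(\hat d_n))/D_n$ and $|V(d)|\le E[|Y|/P(A\mid\X)]<\infty$ for every $d$ by Assumption~\ref{assump:pmoa2}, it is enough to show $D_n\underset{p}{\to}1$ and $N_n-D_nV(\hat d_n)\underset{p}{\to}0$, after which Slutsky's theorem finishes the job.

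The workhorse will be a conditioning identity. Since $\hat d_n^{(-i)}$ depends only on the data with the $i$-th triple deleted, it is independent of $(\X_i,A_i,Y_i)$; conditioning on $\hat d_n^{(-i)}$ and using \eqref{eqn: Vd} yields $E[U_i\mid\hat d_n^{(-i)}]=V(\hat d_n^{(-i)})$, while summing the indicator over $a\in\mathcal A$ yields $E[W_i\mid\hat d_n^{(-i)}]=1$. Thus $W_i-1$ and $\eta_i:=U_i-W_iV(\hat d_n^{(-i)})$ are conditionally mean zero given $\hat d_n^{(-i)}$, and I would split $N_n-D_nV(\hat d_n)=n^{-1}\sum_i\eta_i+n^{-1}\sum_iW_i(V(\hat d_n^{(-i)})-V(\hat d_n))$ and $D_n-1=n^{-1}\sum_i(W_i-1)$. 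The rule-stability sum $n^{-1}\sum_iW_i(V(\hat d_n^{(-i)})-V(\hat d_n))$ I would control in $L^1$: \eqref{eqn: Vd} gives $|V(d_1)-V(d_2)|\le E[|Y|\,1\{d_1(\X)\ne d_2(\X)\}/P(A\mid\X)]\le\sqrt{|\mathcal A|\,E[Y^2/P(A\mid\X)]}\,\sqrt{P_\X(d_1(\X)\ne d_2(\X))}$, so Cauchy--Schwarz and Assumption~\ref{assump:pmoa1} give $E[W_i|V(\hat d_n^{(-i)})-V(\hat d_n)|]\le C\sqrt{E[W_i^2]}\sqrt{E[P_\X(\hat d_n^{(-i)}(\X)\ne\hat d_n(\X))]}\to0$, the factor $E[W_i^2]$ being finite by Assumption~\ref{assump:pmoa2} together with positivity of the propensity.

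For the two conditionally centered averages $n^{-1}\sum_i(W_i-1)$ and $n^{-1}\sum_i\eta_i$, the plan is a second-moment bound. The diagonal terms contribute $O(n^{-1})$ once $\sup_iE[W_i^2]$ and $\sup_iE[\eta_i^2]$ are finite (Assumption~\ref{assump:pmoa2} again, plus bounded-below propensity, which is automatic in the randomized setting). The cross terms are where the real difficulty lies: $\hat d_n^{(-i)}$ and $\hat d_n^{(-j)}$ are \emph{not} independent, since each is trained using the other index's observation, so unlike an ordinary i.i.d.\ weak law the cross-covariances do not vanish identically. I would neutralize them with a leave-two-out coupling: writing $\hat d_n^{(-i,-j)}$ for the rule fit with both $i$ and $j$ removed and $\eta_i',W_i'$ for the corresponding versions of $\eta_i,W_i$, the pairs $(\eta_i',\eta_j')$ and $(W_i'-1,W_j'-1)$ are conditionally i.i.d.\ and mean zero given $\hat d_n^{(-i,-j)}$, so their cross-expectations vanish exactly, while the replacement errors $\eta_i-\eta_i'$ and $W_i-W_i'$ carry a factor $1\{\hat d_n^{(-i)}(\X_i)\ne\hat d_n^{(-i,-j)}(\X_i)\}$ whose probability tends to zero by the analogue of Assumption~\ref{assump:pmoa1} at sample size $n-1$, and the split $|Y|/P=(|Y|/\sqrt P)(1/\sqrt P)$ keeps the Cauchy--Schwarz bounds integrable. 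This renders the off-diagonal contribution $o(1)$ uniformly, so each centered average is $o_p(1)$; combining everything gives $D_n\underset{p}{\to}1$ and $N_n-D_nV(\hat d_n)\underset{p}{\to}0$, and the theorem follows. This cross-term estimate is the only non-routine part; everything else is standard weak-law bookkeeping once the conditioning identity is in hand.
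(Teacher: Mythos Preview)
Your proposal is correct and follows essentially the same route as the paper: both reduce the ratio to weak laws for the numerator and denominator averages, and both resolve the non-vanishing cross-covariances by the leave-two-out coupling $\hat d_n^{(-i,-j)}$, which is exactly the device the paper invokes in its $\text{Var}[U_n]\to 0$ step. The only cosmetic difference is organizational---the paper works with the unconditional bias $E[U_n]-\tilde\mu_n$ and variance directly, whereas you center conditionally at $V(\hat d_n^{(-i)})$ via $\eta_i$ and peel off the rule-stability remainder separately---but your Cauchy--Schwarz bookkeeping on the replacement errors is in fact more explicit than the paper's, so nothing is lost.
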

\begin{proof}
Let $U_i = \frac{Y_i 1\{A_i = \hat{d}_n^{(-i)} (\X_i)\}}{P(A_i | \X_i)}, W_i  = \frac{1\{A_i = \hat{d}_n^{(-i)} (\X_i)\}}{P(A_i | \X_i)}, U_n = n^{-1}\sum_{i=1}^n U_i,$ and $W_n = n^{-1} \sum_{i=1}^n W_i$. First,
\begin{equation*}
\mu_n = \E[U_n] = n^{-1} \sum_{i=1}^n \E\l[ \frac{Y_i 1\{A_i = \hat{d}_n^{(-i)} (\X_i)\}}{P(A_i | \X_i)} \r] = \E\l[ \frac{Y 1\{A = \hat{d}_{n-1}(\X)\}}{P(A|\X)} \r]
\end{equation*}
Denote $\tilde{\mu}_n = \E\l[\frac{Y 1\{A = \hat{d}_n(\X)\}}{P(A|\X)}\r]$, then 
\begin{eqnarray*}
	\mu_n - \tilde{\mu}_n &=& \E\l[ \frac{Y}{P(A|\X)} \l( 1\{A = \hat{d}_{n-1}(\X)\} - 1\{A = \hat{d}_n(\X)\}\r) \r] \\ 
	&\leq& M \E\l[1\{A = \hat{d}_{n-1}(\X)\}  - 1\{A = \hat{d}_n(\X)\}\r] + \E\l[ \frac{|Y|}{P(A|\X)} 1 \l\{ \frac{|Y|}{P(A|\X)} > M \r\}\r] \\ 
	&\to& 0
\end{eqnarray*}
where the convergence is based on Assumption \ref{assump:pmoa1} for the first term and Assumption \ref{assump:pmoa2}, which implies finite first moment, for the second term. Given the first term in Assumption \ref{assump:pmoa2}, we have the following property of the variance
\begin{eqnarray*}
	\text{Var}[U_n] &=& n^{-1} \text{Var}\l[ \sum_{i=1}^n U_i\r] \\ 
	&=& n^{-2} \sum_{i=1}^n \sum_{j=1}^n [\E(U_iU_j) - \E(U_i)\E(U_j)] \\ 
	&=& n^{-2} \sum_{i=1}^n \sum_{j=1}^n \l[ \E\l( 
	\frac{Y_i Y_j 1\{A_i = \hat{d}_n^{(-i)}(\X_i) \} 1\{A_j = \hat{d}_n^{(-j)} (\X_j)\}}{P(A_i |\X_i) P(A_j|\X_j)} \r) - \mu_n^2\r] \\ 
	&\to& n^{-2} \sum_{i=1}^n \sum_{j=1}^n \l[ \E\l( 
	\frac{Y_i Y_j 1\{A_i = \hat{d}_n^{(-i, -j)}(\X_i) \} 1\{A_j = \hat{d}_n^{(-i, -j)} (\X_j)\}}{P(A_i |\X_i) P(A_j|\X_j)} \r) - \mu_n^2\r] \\ 
	&=& n^{-2} \sum_{i=1}^n \sum_{j=1}^n \l\{\l[ \E\l( \frac{Y 1\{A = \hat{d}_{n-2}(\X)\}}{P(A|\X)}\r) \r]^2 - \mu_n^2\r\} \\ 
	&\to& n^{-2} \sum_{i=1}^n \sum_{j=1}^n (\mu_n^2 - \mu_n^2) = 0,
\end{eqnarray*}
where the convergences are based on Assumption \ref{assump:pmoa1}. Thus, we have shown that 
\begin{eqnarray*}
	\E[U_n] - \tilde{\mu}_n &\to& 0 \\ 
	\text{Var}[U_n] &\to& 0
\end{eqnarray*}
Applying the same arguments as above to $W_n$ with Assumption \ref{assump:pmoa1} and the second term in Assumption \ref{assump:pmoa2},
\begin{eqnarray*}
	\tau_n &=& \E[W_n] = \E \l[ \frac{1\{A = \hat{d}_{n-1}(\X)\}}{P(A|\X)} \r] \\ 
	\tilde{\tau}_n &=& \E\l[ \frac{1\{A = \hat{d}_n(\X)\}}{P(A|\X)}\r] = \E \l\{ \E\l[  \frac{1\{A = \hat{d}_n(\X)\}}{P(A = \hat{d}_n(\X) | \X)} \bigg| \X \r]\r\} = 1,
\end{eqnarray*}
and similarly 
\begin{eqnarray*}
	\E[W_n] - 1 &\to& 0 \\ 
	\text{Var}[W_n] &\to& 0
\end{eqnarray*}
Thus by the weak law of large numbers (WLLN), 
$$ U_n - \tilde{\mu}_n \underset{p}{\to} 0 \text{ and } W_n - 1 \underset{p}{\to} 0 $$
which yields 
$$ \frac{U_n}{W_n} - \tilde{\mu}_n \underset{p}{\to} 0 $$
by the multivariate continuous mapping theorem. This completes the proof because
$$ \tilde{\mu}_n = \E\l[ \frac{Y1\{A = \hat{d}_n(\X)\}}{P(A|\X)}\r] = \E[Y^{\hat{d}_n(\X)}] = \E[Y | A = \hat{d}_n(\X)]$$
by applying a version of Radon-Nikodym derivative (i.e. $ \frac{dP^d}{dP} = 1\{a = d(\x)\} / P(a|\x) $ where $P$ denotes the distribution of $(\X,A,Y)$ and  $P^d$ denotes the distribution of $(\X,A,Y)$ under the decision rule $d$ \citep{qian2011performance}) and since 
$$ 	\frac{\sum_{i=1}^n \frac{Y_i 1\{A_i = \hat{d}_n^{(-i)} (\X_i)\}}{P(A_i |\X_i)} }{\sum_{i=1}^n \frac{1\{A_i = \hat{d}_n^{(-i)} (\X_i)\}}{P(A_i | \X_i)}} - E[Y|A = \hat{d}_n(\X)]  = \frac{U_n}{W_n} - \tilde{\mu}_n.$$ 
\end{proof}

\section{Numerical Experiments}\label{sec:sims}
Various simulations have been done in the supplemental material of \cite{jiang2020} to evaluate the performance of the jackknife value function estimator including asymptotic normality. To generate simulated data triplets as introduced in Section \ref{sec:existing}, we assume a treatment of three categories $A$ which has a multinomial distribution with equal probability. We also assume i.i.d. uniform distribution of three variables $X_1, X_2, X_3$, the first two of which determine the decision boundary as in $E[Y] = X_1 + X_2 + \delta_0(X_1, X_2, A)$ and the last of which serves as a nuisance variable to add interference to the modeling. Four scenarios with respect to true decision boundary $\delta_0(X_1, X_2, A)$ were studied: concentric circles (scenario 1), nested steps (scenario 2), parallel diagonal lines (scenario 3), and nested parabolas (scenario 4). 
\begin{eqnarray*}\label{eqn:pmoa_simulation_scenario}
(1) \quad \delta_0(\X,A) &=& 1\{A > 0 \} (1- X_1^2 -X_2^2)(X_1^2 + X_2^2 - 3)^ {1\{A=1\}}\\  
(2) \quad \delta_0(\X,A) &=& 1\{A > 0 \} (1\{X_2 \leq \ceil{X_1 - 2 \cdot 1\{A = 2\}}\} \\& & \qquad \qquad - 1\{X_2 > \ceil{X_1 - 2 \cdot 1\{A = 2\}}\})\\
(3) \quad \delta_0(\X,A) &=& 1\{A > 0 \} (X_1 + X_2 - 1)(-X_1 - X_2 -1)^{1\{A=1\}} \\ 
(3) \quad \delta_0(\X,A) &=& 1\{A > 0 \} (X_2 - X_1^2)(X_1^2 - X_2^2 -2)^{1\{A = 1\}}
\end{eqnarray*}
These four scenarios contain both linear (steps and lines) and non-linear (circles and parabolas) relationships as well as different sizes and shapes of decision areas. Circles are the most complex scenario because there is no lines or splines that could divide a circle, and concentric circles make it even more difficult. Following the concentric circles, nested parabolas and steps are also complex and the parallel lines are relatively easier boundaries. Many sample sizes are considered $50, 100, 200, 400, 800$ and we repeat the simulation $100$ times. 

\begin{sidewaysfigure}
    \centering
    \includegraphics[scale=0.5]{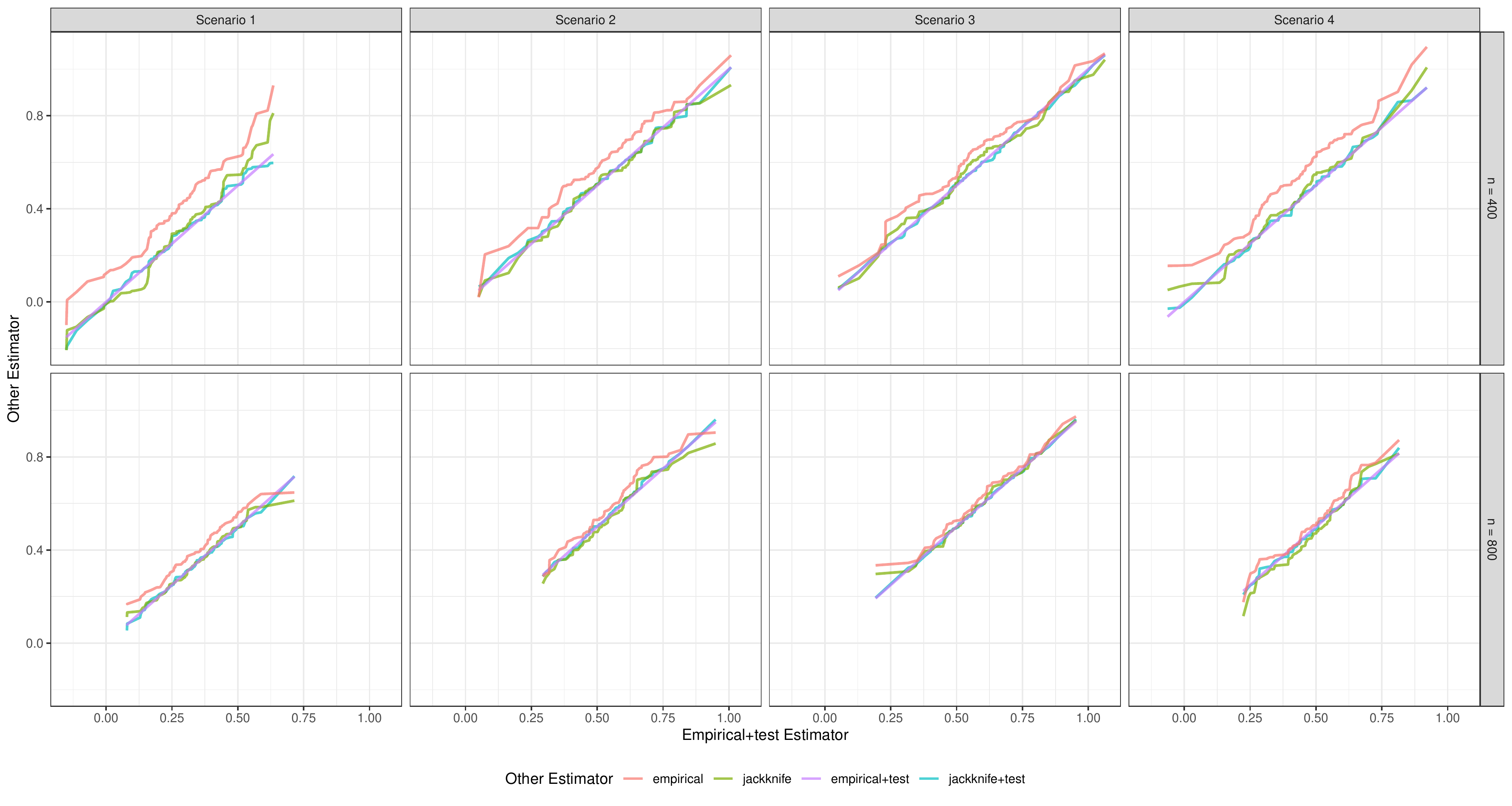}
    \caption{Q-Q plots of the distribution of estimators $\hat{V}^{jk}$ and $\hat{V}^{emp}$ versus the distribution of $\hat{V}_4$ on the KRR model across 100 simulations for $n = 400$ and $n = 800$ over 4 scenarios. (Colors: $\hat{V}^{jk}$ is green, $\hat{V}^{emp}$ is purple, $\hat{V}_1$ is red, and $\hat{V}_3$ is blue. The other two estimators $\hat{V}_1$ and $\hat{V}_3$ are explained more in \cite{jiang2020}.)}
    \label{fig:qqplot}
\end{sidewaysfigure}

We evaluate the performance of jackknife estimators from four main aspects. First, we look at how well the jackknife estimator performs similarly to an ideal estimator. More specifically, we check whether the distribution of $\hat{V}^{jk}$ (also known as $\hat{V}_2$ in the simulation results of \cite{jiang2020}) is identical to the distribution of the ideal empirical estimator $$V^{emp} =  \frac{\sum_{i=1}^n \tilde{Y}_i 1\{\tilde{A}_i =\hat{d}_n(\tilde{\X}_i)\}/ P(\tilde{A}_i | \tilde{\X}_i)}{\sum_{i=1}^n 1\{\tilde{A}_i =\hat{d}_n(\tilde{\X}_i)\}/ P(\tilde{A}_i | \tilde{\X}_i)}$$  (denoted as $\hat{V}_4$ in the simulation results of \cite{jiang2020}). The entire dataset of size $n$ is used to train $d$ and then tested on an independent dataset of the same distribution and size as the training set. This is the most ideal estimator because training and test sets are completely separate and of the same size but unattainable in reality. Simulation results (Supplemental Figure 4 in \cite{jiang2020}) showed that the two estimators have similar distributions especially when $n$ is larger and decision boundaries are simpler. Using the same kernel ridge regression (KRR) model, we increase the sample size to $800$ and plot the distribution comparison of the two estimators in Figure \ref{fig:qqplot}. We have compared $n=50$ and $n=400$ before so we are only comparing $n=400$ and $n=800$ here for cleaner plots. It is easy conclude from Figure \ref{fig:qqplot} that our jackknife estimator can perform as well as the ideal estimator. With a higher sample size, the distribution is more gathered towards the center and the distributions of all estimators are closer to $\hat{V}^{emp}$ (the straight purple line). 

Second, we look at the variability of our jackknife estimator by measuring how much the confidence interval (CI) of $\hat{V}^{jk}$ covers the true value function $V_0$. In \cite{jiang2020}, a $95\%$ CI is calculated for each simulation with the formula $\hat{V}^{jk} \pm z_{0.975} \cdot SE(\hat{V}^{jk})$ where $z_{0.975}$ is the standard normal quantile of $97.5\%$. We follow the same definition of coverage, which is defined as the proportion of $100$ simulations whose $95\%$ CI contains the truth $V_0$. Using the same KRR model, we summarize the coverages in Table \ref{tab:coverage}, adding a larger sample size $n=800$ to Supplemental Table 3 in \cite{jiang2020}. There are a few fluctuations across sample sizes for scenarios 1 and 2 given the $5\%$ Monte Carlo error (the maximum standard error of the coverage), but we see a general trend where the coverage is closer to $95\%$ as sample size grows. 

\begin{table}
	\centering 
	\caption{Coverage of the empirically true estimator $V_0$ with 95\% CI of $\hat{V}^{jk}$}
	\begin{tabular}{lrrrrr}\hline 
		Sample Size & 50 & 100 & 200 & 400 & 800 \\ \hline 
		Scenario 1 & 84\% & 87\% & 92\% & 88\% & 96\% \\ 
		Scenario 2 & 91\% & 91\% & 96\% & 96\% & 92\% \\
		Scenario 3 & 93\% & 97\% & 96\% & 96\% & 96\% \\
		Scenario 4 & 89\% & 88\% & 90\% &94\% & 93\% \\ \hline 
	\end{tabular}
	\label{tab:coverage}
\end{table}

Next, we check the power of the test statistic used to compare the optimal precision medicine model (PMM) and the optimal zero-order model (ZOM). ZOMs are single, fixed treatment rules that assign the same treatment to all subjects, which is considered simple but not necessarily works for each subject's situation. The model comparison is carried by a two-sample Z-test with test statistic 
$$T^{sim}(\hat{d}_{\text{PMM}}, \hat{d}_{\text{ZOM}}) = \frac{\widehat{V}(\hat{d}_{\text{PMM}}) - \widehat{V}(\hat{d}_{\text{ZOM}})}{\sqrt{\frac{\sum_{i=1}^n (R_{\text{PMM,i}} - R_{\text{ZOM, i}})^2 }{n(n-1)}}}$$ \citep{jiang2020}. Power is the true positive rate of correctly detecting a significant effect of PMM over ZOM and is estimated as the proportion of $100$ simulations whose p-values are under $0.05$. Results (Supplemental Table 4 in \cite{jiang2020}) showed low powers when sample size was low ($n \leq 200$) but the estimated power increased by a lot from $n=200$ to $n=400$. 
\begin{table}
	\centering 
	\caption{Estimated power of jackknife $T^{sim}$ based on 100 simulations}
	\begin{tabular}{lrrrrr}\hline 
		Sample Size & 50 & 100 & 200 & 400 & 800 \\ \hline 
		Scenario 1 & 13\% & 7\% & 18\% & 38\% & 67\%\\ 
		Scenario 2 & 16\% & 13\% & 23\% & 34\% & 49\%\\
		Scenario 3 & 15\% & 24\% & 41\% & 81\% & 94\%\\
		Scenario 4 & 11\% & 15\% & 35\% & 56\% & 85\%\\ \hline 
	\end{tabular}
	\label{tab:power}
\end{table}
Table \ref{tab:power} contains the new sample size $n=800$ and we see a bigger increase for all scenarios as we increase the sample size beyond $n=400$. Simpler scenarios such as lines and parabolas have higher power than more complex scenarios such as circles and steps. There is still room to improve power for scenarios 1 and 2.

Last, we use simulations to study the asymptotic property of the jackknife estimator. \cite{jiang2020} used the following shifted test statistic to measure how far apart are the PMM and ZPM value functions. 
\begin{equation}\label{eqn:pmoa_asymp_norm}
T^{sim}_0 = \frac{[\hat{V}(\hat{d}_{\text{PMM}}) - \hat{V}(\hat{d}_{\text{ZOM}})] - [V_0(\hat{d}_{\text{PMM}}) - V_0(\hat{d}_{\text{ZOM}}) ]}{\sqrt{\frac{\sum_{i=1}^n (R_{\text{PMM},i} - R_{\text{ZOM},i})^2}{n(n-1)}}}
\end{equation}
Both the Shapiro-Wilk test (Supplemental Table 5 in \cite{jiang2020}) and Q-Q plots of the distribution of $T_0^{sim}$ over $100$ simulations versus standard normal distribution (Supplemental Figure 5 in \cite{jiang2020}) are used to learn the normality of $T_0^{sim}$. Table \ref{tab:shapirowilk} extends the Shapiro-Wilk test to sample size of $n = 800$. 
\begin{table}
	\centering 
	\caption{P-values of Shapiro-Wilk test of normality on jackknife $T_0^{sim}$}
	\begin{tabular}{lrrrrr}\hline 
		Sample Size & 50 & 100 & 200 & 400 & 800 \\ \hline 
		Scenario 1 & 0.20 & 0.37 & 0.15 & 0.18 & 0.74\\ 
		Scenario 2 &0.85 & 0.92 & 0.41 & 0.79 & 0.64 \\
		Scenario 3 & $<$0.01 & 0.99 & 0.13 & 0.61 & 0.87 \\
		Scenario 4 & $<$0.01 & 0.67 & 0.81 & 0.84 & 0.86 \\ \hline 
	\end{tabular}
	\label{tab:shapirowilk}
\end{table}
The p-values are still well above $0.05$, indicating that there is not enough evidence to reject the null hypothesis that $T_0^{sim}$ has a standard normal distribution for all scenarios and sample sizes $n \leq 100$. Next we look at a Q-Q plot (Figure \ref{fig:normality}) to visually check if the distribution of $T_0^{sim}$ is normally distributed. Similar as before, we only show $n=400$ and $n=800$ here to focus on higher sample sizes because \cite{jiang2020} compared lower sample sizes $n=50$ and $n=400$. As sample size doubled to $n=800$, the scatter points of the distribution of $T_0^{sim}$ live more on the straight diagonal line, implying closer to normality than $n=800$ for all scenarios. Albeit there are still some outliers (especially for scenario 2), we see improvement at the two tails for other scenarios.
\begin{sidewaysfigure}
    \centering
    \includegraphics[scale=0.6]{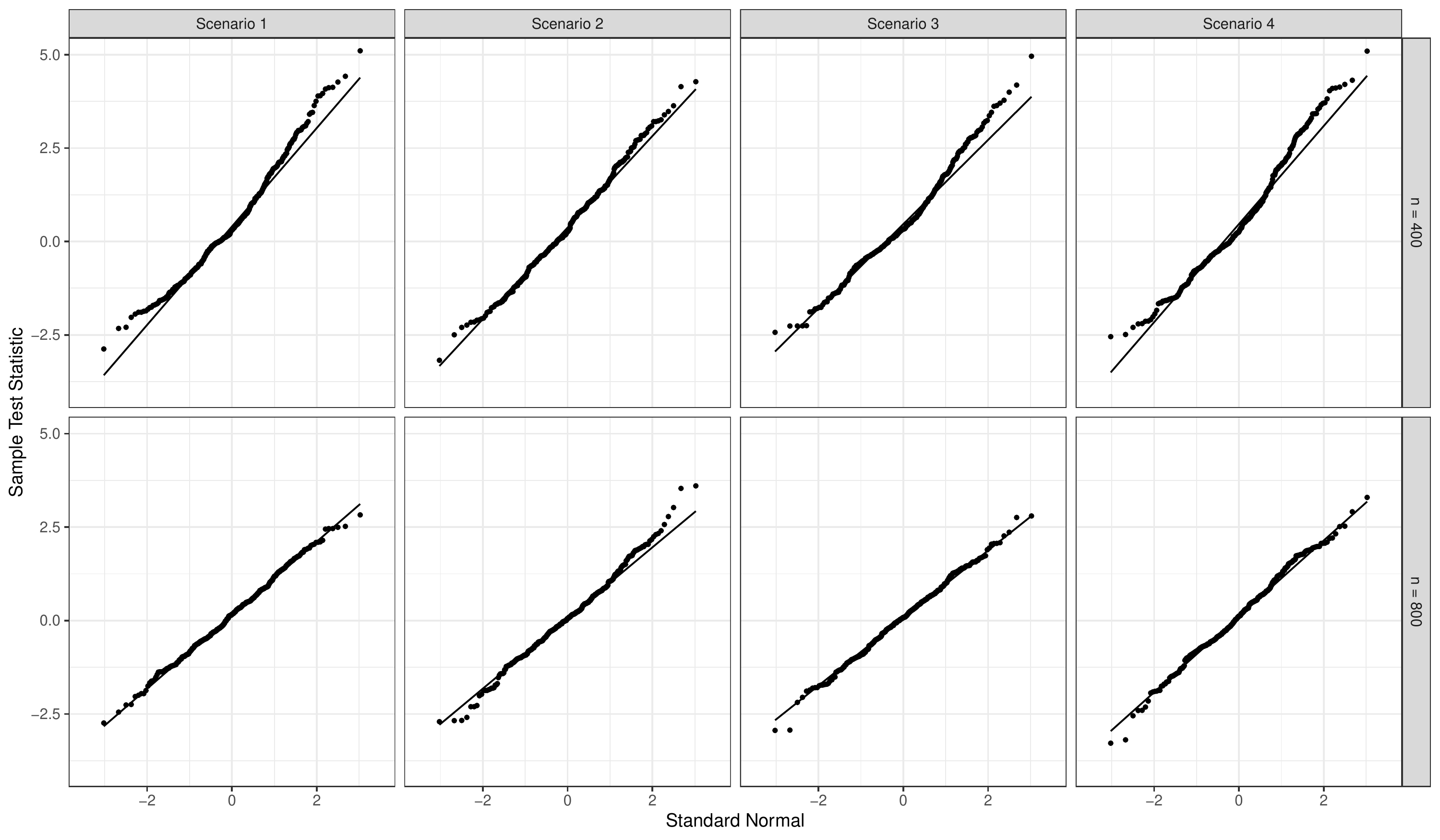}
    \caption{Q-Q plots of the distribution of the jackknife test statistic $T_0^{sim}$ across $100$ simulations versus the standard normal distribution}
    \label{fig:normality}
\end{sidewaysfigure}

\section{Summary}
\cite{jiang2020} used the jackknife method to estimate value functions and found the optimal individualized treatments for participants enrolled in a knee osteoarthritis clinical trial.  We expand the discussion and justification with additional statistical background in this technical report. We have introduced precision medicine and existing methods in greater detail (Sections \ref{sec:intro} and \ref{sec:existing}), provided the definition and derivation of jackknife estimators (Section \ref{sec:definition}), showed theoretical evidence for the consistency of jackknife estimators (Section \ref{sec:consistency}), and numerical evidence for the performance and properties of jackknife estimators with a higher sample size (Section \ref{sec:sims}). We recommend reading \cite{jiang2020} for clinical application and interpretation of the optimal ITR of knee osteoarthritis as well as an overall understanding of the pipeline, and recommend reading this article for more in-depth statistical derivation and methodology.

\bibliographystyle{apalike}
\bibliography{allbib}
\end{document}